\DeclareMathOperator*{\argmin}{arg\ min.}
\newcommand{\mytheorem}[2]{%
\newtheorem{t#2}{#1}%
\newenvironment{#2}{\begin{t#2}}{\end{t#2}}}
\newcommand{\myremark}[2]{%
\newtheorem{t#2}{#1}[section]%
\newenvironment{#2}{\begin{t#2}}{\end{t#2}}}
\theoremstyle{plain}
\title{\LARGE \bf
Model Based In Situ Calibration of Six Axis Force Torque Sensors
}
\author{Francisco Javier Andrade Chavez$^{1,2}$, Silvio Traversaro$^{2}$, Daniele Pucci$^{2}$ and Francesco Nori$^{2}$
\thanks{This paper was supported by the FP7 EU projects CoDyCo (No. 600716
ICT 2011.2.1 Cognitive Systems and Robotics) and Koroibot (No. 611909
ICT-2013.2.1 Cognitive Systems and Robotics).}
\thanks{$^{1}$ Francisco Javier Andrade Chavez with CVU 468287 receives support from the National Council of Science and Technology in Mexico, CONACYT}
\thanks{$^{2}$ All authors belong to the Istituto Italiano di Tecnologia, iCub Facility department, Genova, Italy.
       Emails: {\tt\small ing.andrade.francisco@gmail.com}, 
         {\tt\small silvio.traversaro@iit.it}, 
          {\tt\small francesco.nori@iit.it}, 
           {\tt\small daniele.pucci@iit.it}}
}
\begin{document}

\maketitle
\thispagestyle{empty}
\pagestyle{empty}

\begin{abstract}
This paper proposes and validates an \textit{in situ} calibration method to calibrate six axis force torque (F/T) sensors once they are mounted on the system. This procedure takes advantage of the knowledge of the model of the robot to generate the expected wrenches of the sensors during some arbitrary motions. It then uses this information to train and validate new calibration matrices, taking into account the calibration matrix obtained with a classical Workbench calibration. 
The proposed calibration algorithm is validated on the F/T sensors mounted on the iCub humanoid robot legs. 

\end{abstract}

\section{INTRODUCTION}
Six axis force torque (F/T) sensors have been used in robotics systems since the 1970's \cite{watson1975pedestal}. Around the same time  research on force control began \cite{whitney1987historical}.
F/T sensing has become a important sensing capability which can be highly exploited in robotics since is an essential knowledge for regulating contact forces and torques.
 Although strain gauge-based sensing technology has been widely used in industrial robots, its practical use in humanoids robots have been limited by the experimental evidence that installing the F/T sensors into complex structures such as Humanoids seems to change the measurements returned by the sensors making them unreliable. As explained in \cite{DRC-what-happened}, during the last DARPA Robotics Challenge most of the teams could not take advantage of having such sensors. The Boston Dynamics ATLAS' Six Axis F/T sensors were not used due to the bad quality of sensors measurements, to the point that the IHMC and MIT teams used the F/T sensors only as binary contact sensor.
 In standard operating conditions, relevant changes in the calibration matrix may occur in months. F/T sensors are recommended to be calibrated at least once a year as stated by ATI \cite{atimanual} and  Weiss Robotics \cite{kms40manual} which are some of the leading companies for F/T sensors. The calibration procedure is done by the manufacturer company which implies that the sensor must be unmounted and sent back to them and then mounted again. It has been noted during different experiments that the reliability of the measurements changes after mounting the sensor on the robot, even in recently calibrated sensors. The knowledge of external forces on the robot can be used for more advanced control strategies, which makes having a properly calibrated F/T sensor essential for allowing robots to perform more complex actions.
 
 Most of the six axis F/T sensors available on the market are based on silicon or metallic strain gauges technologies, even if alternative technologies are starting to be adopted \cite{tar2011development}. 
 
 The commonly used model for predicting the force-torque from the raw strain gauges measurements of the sensor is an affine model. This model is sufficiently accurate since these sensors are mechanically designed and mounted so that the strain deformation is (locally) linear with respect to the applied forces and torques. Then, the calibration of the sensor aims at determining the two components of this model, i.e. a six-by-six matrix and a six element vector. These two components are usually referred to as the sensor's \emph{calibration matrix} and \emph{offset}, respectively. Preponderant changes in the sensor's offset can occur in hours, however, and this in general requires to estimate the offset before using the  sensor. 
 
The typical calibration procedure considers first identifying the offset when no load is applied on the sensor and then carefully place some weights in specific positions to have well known gravitational forces and torques in order to span the space of the sensor. The methods for obtaining the calibration matrix have been thoroughly studied and, although many methods exist, least squares remains the most popular \cite{braun2011}. For simplifying the time consuming procedure of careful load placing some specialized structures have been designed \cite{uchiyama1991systematic,watson1975pedestal}. In other cases a previously calibrated sensor is used as reference ~\cite{faber2012force}~\cite{oddo2007}. This has the disadvantage of depending on the availability of another sensor which is not always the case.

The difference that has been observed on a six axis F/T sensor after being mounted has motivated the search for \textit{in situ} calibration methods \cite{InSituAcc}. Among other advantages, these methods allow to perform the calibration in the sensor's final destination avoiding possible modification of the calibration matrix that arise from mounting and removing the sensors from its working structure. To the best of our knowledge, the first \emph{in situ} calibration method for force-torque sensors was proposed in \cite{shimanoroth}. But this method  exploits the topology of a specific kind of manipulators, which are equipped with joint torque sensors then leveraged during the estimation. 
Another in situ calibration technique for force-torque sensors can be found in \cite{roozbahani2013novel}. But the use of supplementary already-calibrated force-torque/pressure sensors impairs this technique for the reasons we have discussed before. 

In our previous work \cite{InSituAcc} a F/T sensor was calibrated \emph{in situ} by assuming that a single rigid body equipped with an accelerometer was attached to the F/T sensor. While we assumed that the inertial parameters (mass, center of mass, 3D inertia tensor) of the attached rigid body were unknown, nevertheless we assumed that a set of additional masses of known mass was attached to the F/T sensor load in the various experiments. Even the (limited) assumptions of \cite{InSituAcc} complicated a lot the use of the introduced techniques. In particular the need for knowing a-priori the accelerometer orientation w.r.t. the F/T sensor frame and the assumption that only a rigid body was attached to the F/T sensor complicated the use of such techniques in the case of F/T embedded in the robot structure for performing joint torques estimation \cite{Fumagalli2012}. To overcome this limitations, in this paper we assume that the inertial parameters of robot links are known. While this may seem a rather bold assumption, it is possible if the inertial parameters obtained from the Computer Aided Manufacturing (CAD) model of the robot are validated by weighting experiments on the individual robot links, as was our case. 

There are two main contribution in this paper. The first one is to formulate the calibration problem by decoupling the offset estimation problem from the calibration matrix estimation problem, enabling the use of multiple datasets with multiple unknown offsets that share the same calibration matrix. The second one is to cast the calibration matrix estimation problem as \emph{regularized least square} problem, in which the regularization takes into account the information known from a previous available calibration matrix.

The proposed algorithms are validated by calibrating a six axis F/T sensor found in the right leg of an iCub humanoid robot. Some of the calibration and validation datasets come from real world scenarios in which the iCub is switching from two feet balancing to one foot balancing.

The paper is organized as follows. Section \ref{sec:problem} describes the formulation of the problem. Section \ref{sec:method} describes the strategies used to do the \textit{model based in situ} calibration. Section \ref{sec:experiments} describes the characteristics of the datasets used for training and the validation procedures used. Section \ref{sec:results} shows  the results of both validation procedures and \ref{sec:conclusions} states the insights obtained through the experiments.


\begin{figure}
\begin{minipage}[c]{0.29\textwidth}
    \caption{
      Location of the six axis F/T sensors mounted on the iCub. The F/T sensors are embedded in the robot structure rather then being mounted only on the end/effectors to estimate the joint torques, as explained in \cite{Fumagalli2012}.
    } \label{fig:sensors}
  \end{minipage}
  \begin{minipage}[c]{0.16\textwidth}
    \includegraphics[width=\textwidth]{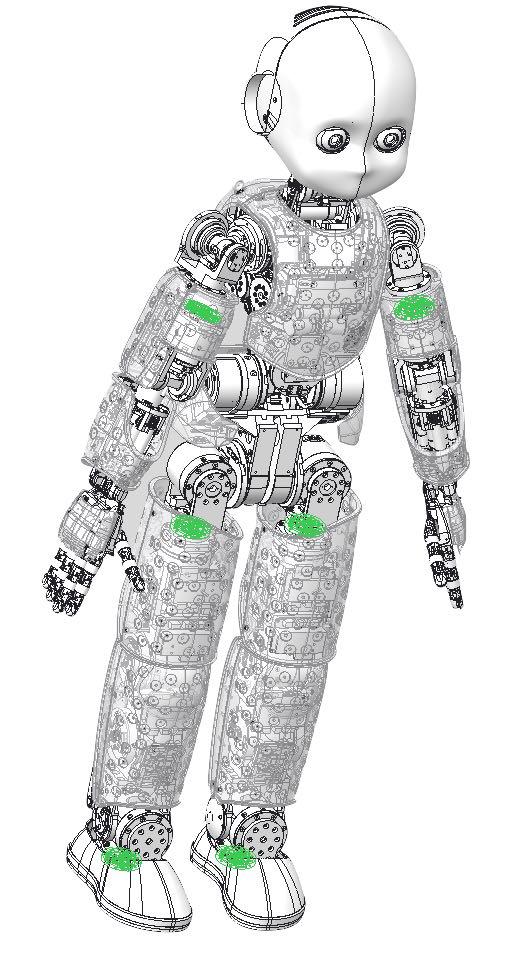}
  \end{minipage}\hfill
  \end{figure}

\section{PROBLEM STATEMENT}
\subsection{Notation}

The following notation is used throughout the paper.
\begin{itemize}
 \item The Euclidean norm of either a vector or a matrix of real numbers is denoted by $\left\| \cdot \right\|^2$.
 \item Given a series of vectors $u_i \in \mathbb{R}^n$, $\mu_{u} \in \mathbb{R}^n$ is the mean value of the series calculated as $\frac{1}{N}\sum_{i = 1}^N u_i$.  
\end{itemize}
\subsection{F/T Sensors Calibration}
\label{sec:problem}
 The strain gauge technology bases its measurements in the changes of the resistance according to small deformations of the material. The sensor is designed such that the resulting deformation in the sensors structure are inside the linear section of the material for the specified range. Because of this, a linear relationship between deformation and forces is assumed.
We assume that the model of the sensor is linear and has the following form:
\begin{equation}
w =Cr+o \label{eq:lin}
\end{equation}
where $w \in \mathbb{R}^6$  are the wrenches, $C \in \mathbb{R}^{6 \times 6}$ is the calibration matrix, $r \in \mathbb{R}^6$ are the raw measurements and $o \in \mathbb{R}^6$ is the offset.

Both the calibration matrix $C$ and the offset $o$ are typically unknown and need to be estimated.
Assuming that for a series of raw measurements $r_i$, we have the corresponding wrench applied on the sensor $w_i$, we can cast the problem of finding the calibration matrix and the offset as a multiple line regression using least squares fitting technique. The calibration matrix estimation can be considered as six different problems in which each row is a separate problem with six independent variables as input and one dependent variable output. For the sake of simplicity we solve all six axis at once. Thus the problem is stated as follows:
\begin{equation}
\argmin_{C , o}\ ~ \frac{1}{N}\sum_{i = 1}^N \left\|w _i - Cr_i - o\right\|^2 \\
\label{eq:prob}
\end{equation} 
Where $N$ is the number of data samples in the dataset. In classical calibration matrix estimation algorithms, the input data $(r_i, w_i) ,\  i = 1\dots N$ are obtained by applying a set of known masses in known locations with the sensor mounted on the workbench. For this reason we will refer to this kind of calibration as \emph{Workbench} calibration. 

As discussed in \cite{InSituAcc}, it is typically preferred to estimate offset separately from the calibration matrix, as the offset can typically vary across different experiments due to temperature drift, so the offset is removed from the raw measurements separately, and the calibration problem is reduced to: 
\begin{equation}
\argmin_{C}\ ~ \frac{1}{N}\sum_{i = 1}^N \left\|w _i - Cr_i\right\|^2 \\
\label{eq:probNoOffset}.
\end{equation} 

\section{MODEL BASED IN SITU CALIBRATION METHOD}
\label{sec:method}

Once a sensor is mounted in a complex structure such as humanoid robot, its calibration matrix may change due to the internal deformation caused by the mounting screws and other mounting deformations \cite{InSituAcc}. 
For this reason we need to \emph{recalibrate} the F/T sensor using a set of \emph{in situ} samples $(r_i, w_i) ,\  i = 1\dots N$ obtained directly on the robot. 

If it is known that no external wrench is acting on the limb on which the F/T sensor is mounted, then the expected wrench applied on the sensor can be computed using limb model and the instantaneous limb joints position, velocity and acceleration  \cite{Fumagalli2012}.  

\subsection{Centralized offset removal from training in situ datasets}
Once the \emph{in situ} calibration data $(r_i, w_i) ,\  i = 1\dots N$ are available, we need to get rid of the offset, even before estimating the calibration matrix. 

In this section, we propose a method to obtain a problem in the form \eqref{eq:probNoOffset}, without the need of computing the offset $o$. 

We centralize the data since for a problem of the form of 
\ref{eq:prob}, the solution for the optimal calibration matrix $C^{*}$ and optimal offset $o^{*}$ is given by 
\begin{equation}
o^{*}=\mu_w -C^{*}\mu_r.
\end{equation} 

So the form of the problem becomes independent of the offset and can be reduced to:
\begin{align}
\hat{w}_i&=w _i-\mu_w, \qquad  \label{eq:centralizedRaw}
\hat{r}_i= r_i-\mu_r, \\
\label{eq:centralizedCalibration}
 \argmin_{C \in \mathbb{R}^{6\times6}}\ ~ &\frac{1}{N}\sum_{i = 1}^N \left\|\hat{w}_{i} - C\hat{r}_i \right\|^2 .%
\end{align}
Where $\mu_w \in \mathbb{R}^6$ is the vector of the mean of the wrenches, $\mu_r \in \mathbb{R}^6$ the vector of the mean of the inputs, $\hat{w} _{i} \in \mathbb{R}^6$ and $\hat{r}_i \in \mathbb{R}^6$ are the \emph{centralized} data. 
Note that even if in \eqref{eq:centralizedCalibration} we did not removed explicitly the offset, the resulting optimization problem has the same form of \eqref{eq:probNoOffset}, and so for the calibration point of view the proposed algorithm is equivalent to offset removal. A proof for this statement is provided in the appendix \ref{appTheorem}.

\subsection{Model based in situ calibration matrix estimation}
Considering the linear model in \eqref{eq:lin}, a least squares technique is used for performing the linear regression. Assuming the calibration performed on the sensor was correct, we assume the new calibration matrix must not be very different from the matrix obtained using Workbench calibration. To enforce this assumption, we introduce a regularization term to penalize the difference with respect to the Workbench matrix. The new calibration matrix is obtained through the following optimization problem :
\begin{equation}
\label{eq:modelInSitu}
 C^{*} =   \argmin_{C \in \mathbb{R}^{6\times6}}\ ~ \frac{1}{N}\sum_{i = 1}^N \left\|\hat{w} _{i} - C\hat{r_i} \right\|^2 \\+\lambda\left\|C-C_w\right\|^2
\end{equation}
Where $C_w \in \mathbb{R}^{6 \times 6}$ is the \emph{Workbench} calibration matrix provided by the manufacturer, $\lambda$ is used to decide how much to penalize the regularization term and  N is the number of data points in the dataset. The regularization is added in order to try to keep the calibration matrix as close to the Workbench but with an improved performance once the sensor is already mounted on the system.

\section{EXPERIMENTS} 
Experiments have been performed on the 53 DOF robot iCub. Six custom-made six axes F/T sensors \cite{IITsensors}, one per ankle, leg and arm, are placed as shown in
Fig. \ref{fig:sensors}.
\begin{figure*}[t!]
    \centering
    \begin{subfigure}[t]{0.15\textwidth}
        \centering
        \includegraphics[width=\textwidth]{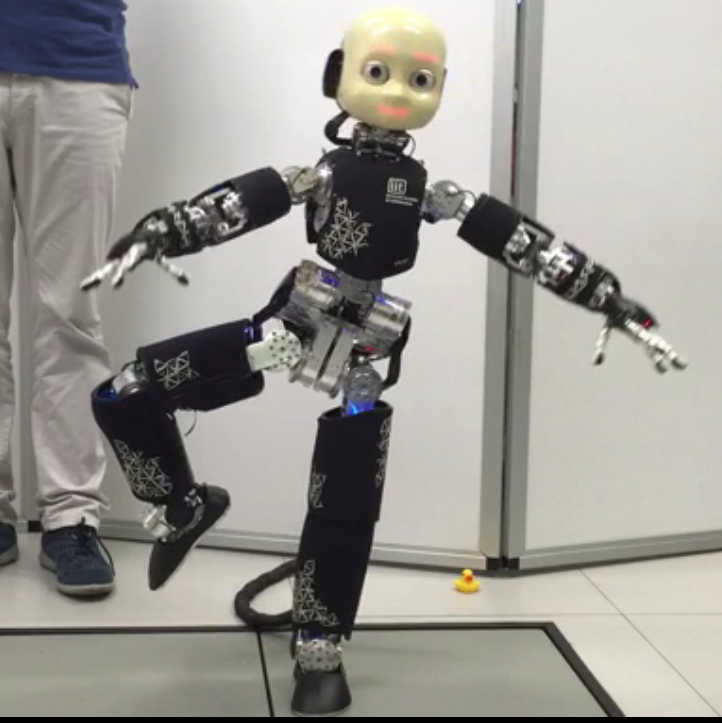}
        \caption{Bending knee on left leg support}
    \end{subfigure}%
      ~ 
    \begin{subfigure}[t]{0.15\textwidth}
        \centering
        \includegraphics[width=\textwidth]{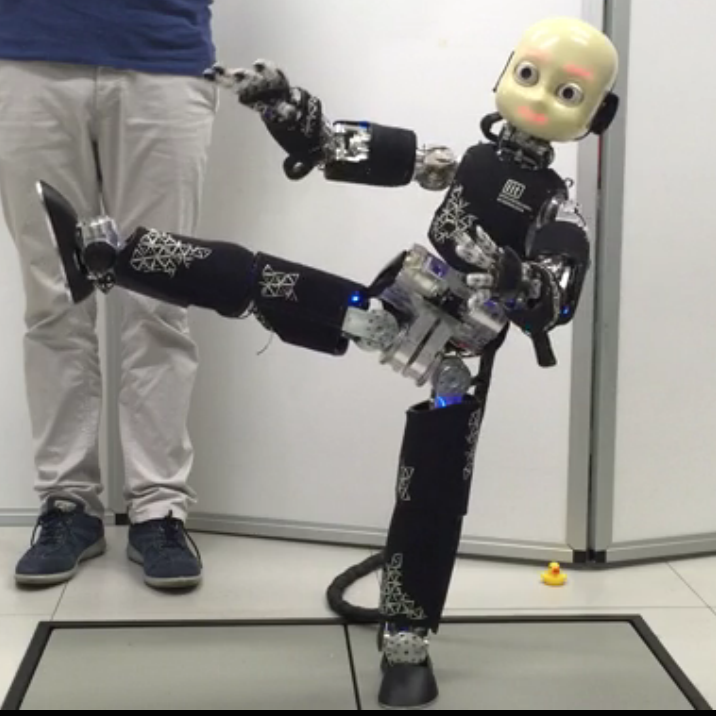}
        \caption{Stretched leg on left leg support}
    \end{subfigure}
    ~ 
    \begin{subfigure}[t]{0.15\textwidth}
        \centering
        \includegraphics[width=\textwidth]{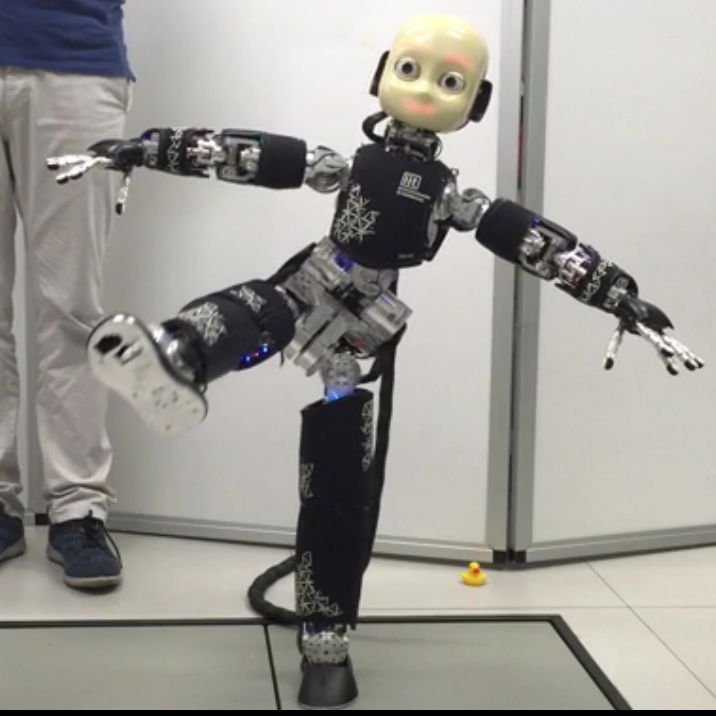}
        \caption{Moving back to front and sideways}
    \end{subfigure}
      ~ 
    \begin{subfigure}[t]{0.15\textwidth}
        \centering
        \includegraphics[width=\textwidth]{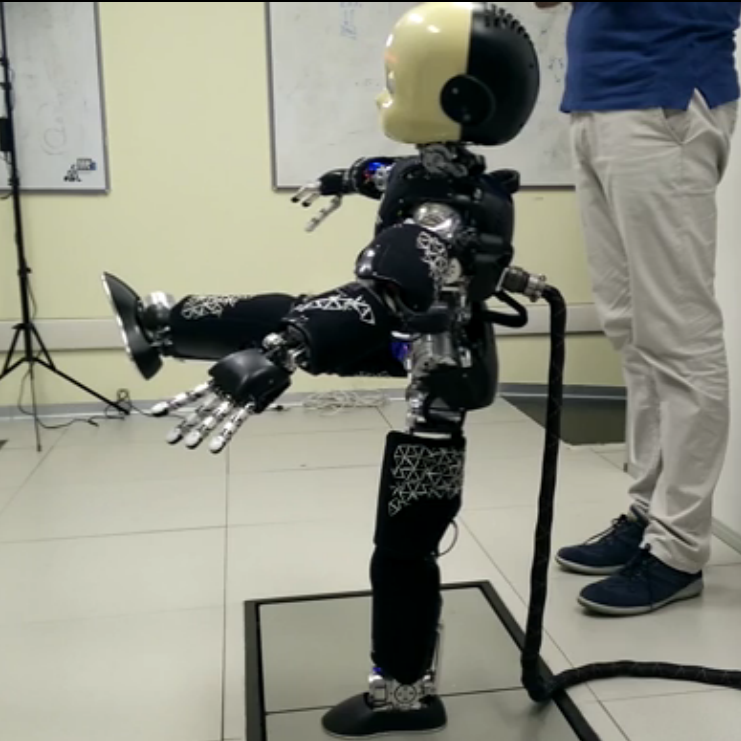}
        \caption{Stretching to the front, side view}
    \end{subfigure}
     ~ 
    \begin{subfigure}[t]{0.15\textwidth}
        \centering
        \includegraphics[width=\textwidth]{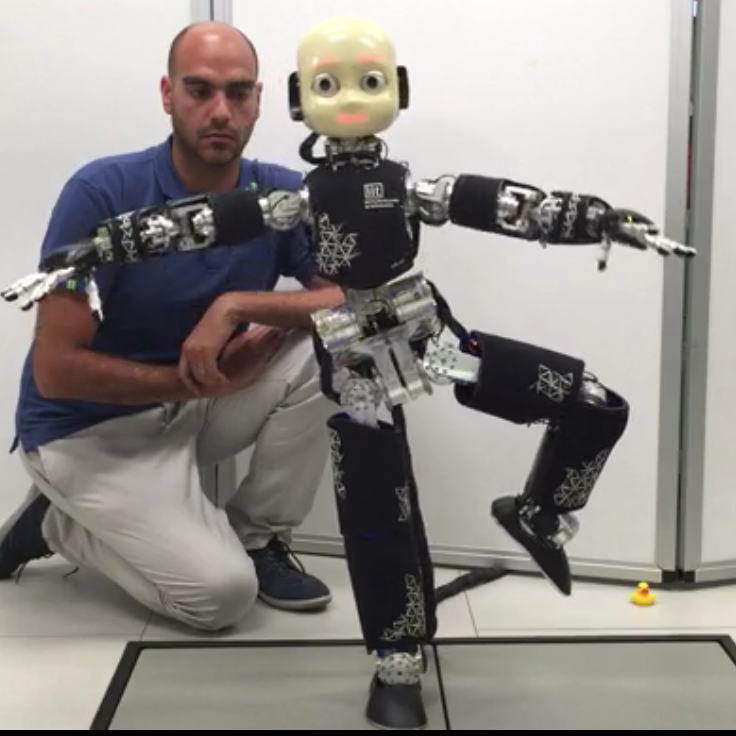}
        \caption{Bending knee on right leg support}
    \end{subfigure}
     ~ 
    \begin{subfigure}[t]{0.15\textwidth}
        \centering
        \includegraphics[width=\textwidth]{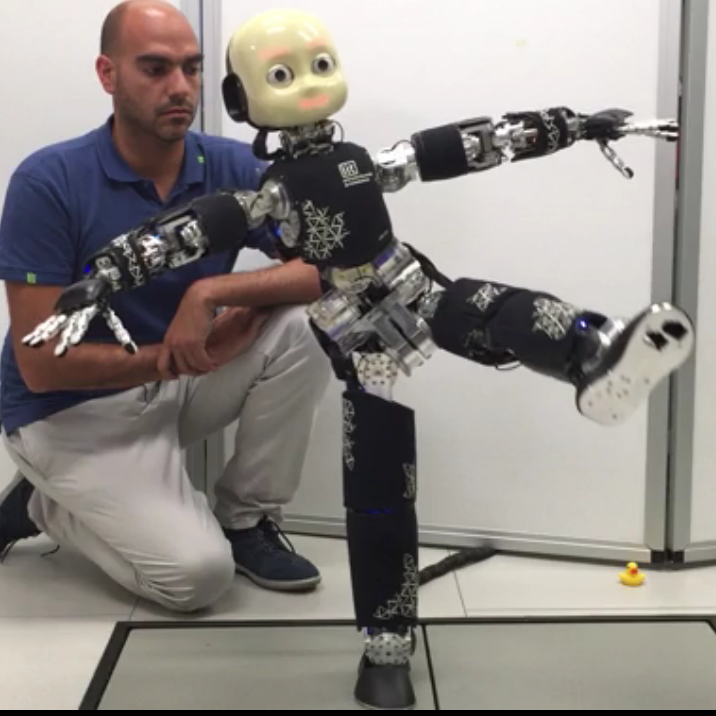}
        \caption{Streched leg on right leg support}
    \end{subfigure}
       \caption{Images from the extended balancing demo with contact switching}
       \label{fig:demo}
\end{figure*}

\label{sec:experiments}

\subsection{Remark on offset removal}
While the higher unit resistance and sensitivity of silicon based gauges are definite advantages, their greater sensitivity to temperature variations and tendency to drift are disadvantages in comparison to metallic foil sensors \cite{Transactions}.
This plus the effect of hysteresis may result in a different offset value for each experiment, but it is assumed the offset remains constant during the expermient due to the small time frame of each experiment. The effects of drift and hysteresis are currently not considered in the model. This also mean that the offset estimation should be done before every experiment, but this is trivial once the calibration matrix is known.

In the experimental settings, two methods were compared to remove the offset from the estimation problem. The first one is the \emph{centralized offset removal} introduced in this paper, while the second method is the \emph{in situ offset estimation}  proposed in \cite{InSituAcc}, where the accelerometers measurements are simulated using the kinematic model of the robot. This method has the advantage that the \emph{real} offset is directly calculated in the raw data and is independent of the estimated calibration matrix. It assumes a constant center of mass which prevents the use of this method in all datasets since this assumption is not always enforced.

In both cases we end up with a modified version of the raw data in which the effect of the offset is removed. With a little abuse of notation we have:
\begin{equation}
\hat{r}_i= \begin{cases} 
      r_i-o_r & \text{in situ offset estimation} \\
      r_i-\mu_r & \text{centralized offset removal}
   \end{cases}
\end{equation}
\begin{equation}
\hat{w}_i = \begin{cases} 
      w_i & \text{in situ offset estimation} \\
      w_i-\mu_w  & \text{centralized offset removal}
   \end{cases}
\end{equation}
Where $\hat{r}_i$ and $\hat{w}_i$ are the data used to solve the model based in situ calibration problem \eqref{eq:modelInSitu}.

\subsection{Training Datasets} 
We have 3 types of datasets:
  \begin{itemize}
      \item \textbf{Grid}: moving the legs in a grid pattern on a fixed pole. The contact is on the waist of the robot.
      \item \textbf{Balancing}: doing a one foot balancing demo. The external contact is the support foot. A video of this demo can be found in \cite{video}.
      \item \textbf{Extended Balancing}: doing an extended one foot balancing demo with more widespread leg movements. The contact is on one support foot.
  \end{itemize}
  Both the balancing demo and the extended balancing involve more general movements like flexing the legs, reason for which the offset can only be removed using the \emph{centralized offset removal} method.
  
The following assumptions are valid for all datasets:
    \begin{itemize}
        \item There is only one contact point where external force is applied and its location is known.
        \item Rigid body inertial parameters are known.
        \item Relation between raw measurements and F/T values is a linear affine transformation.
        \item The offset does not change during the experiment, although it may be different in the other experiments.
        \item The effects of drift and hysteresis are negligible due to the short time range of the experiments.
    \end{itemize}

The wrenches used as reference are estimated through the model and kinematic measurements using the methods described in \cite{Fumagalli2012}.
\subsection{ Validation procedures}
For comparing and validating the results 7 different datasets where used: one balancing experiment, two extended balancing at slow speed, two extended balancing with a faster movement and two grid pattern experiments one with a wider joint range than the other. Most of the datasets were taken on different days. All of them were taken on the same robot with the same sensors, and the sensors were never unmounted from the robot in the time between two experiments. 

Two different validation procedures were done to compare the resulting calibration matrices. In the first procedure, we simply test the calibration matrices obtained on each dataset with $\lambda=0$ against all others including the original calibration matrix provided by the manufacturer, which will be called hence forth as the \emph{Workbench} matrix. This is in order to determine if one experiment was more representative and useful than the others.\\
 For the second validation procedure different calibration matrices are generated with varying values of $\lambda$ and then are tested on a dataset performing movements similar to the actual use of the robot, in this case an extended balancing dataset with contact switching. They are compared through an estimation of the external forces in the section between the ankle and the hip. This computation is done through the algorithm proposed in \cite{Fumagalli2012}. The estimation considers the gravity and, in the ideal case, it should show a value of 0, since during the experiments no external force was applied to the robot in that subchain. The experiment in which it was validated starts with the robot hanging in the air. Then performing the extended balancing demo on both feet one after the other as shown in fig. \ref{fig:demo}. This is to include many of the behaviours expected form a general use of the robot and see the benefits of the \textit{In situ} calibration. The important restriction kept during the experiment is that the only external force acting on the robot was gravity.\\
In the first method the best calibration matrix is the one in which the error with respect to the estimated wrenches is lower. To avoid problems with comparing different units, $N$ and $Nm$, we do the comparison on the error percentage calculated in the following way: 
\begin{equation*}
    e_{d\%}=\frac{w_{ref}-C_dr_{ref}}{w_{ref}-C_wr_{ref}}
\end{equation*}
Where $d$ is the dataset in which the calibration matrix was calculated, $ref$ is the dataset in which the calibration matrix is being tested, $C_d \in \mathbb{R}^{6 \times 6}$ is the calibration matrix calculated on the $d$ dataset and $e_{d\%}$ is the error percentage of $C_d$.
In this case if an experiment performs worse than the Workbench matrix it is automatically discarded for the second validation procedure. 


\section{RESULTS}
\label{sec:results}

\subsection{First validation procedure}
\rowcolors{1}{lightgray}{white}
 Table \ref{tab:1} summarizes the errors of all axis of all datasets against all datasets. The rows represent the dataset in which it was tested and the columns the calibration matrix that was used to calculate the error. It can be seen that the calibration matrix obtained from the simple balancing experiment performs worse than those of the Workbench matrix. This is confirmed in figure \ref{fig:all1valid}, which as an example of the error of the different calibration matrices on a dataset. All the other calibration matrices give results relatively close to each other and are indeed way better than the results of the Workbench matrix.
\begin{table*}[ht]
\centering
\caption{Error percentage between calibration matrix and estimated wrenches ($ e_{d\%}$) over all datasets.}
\resizebox{0.9\textwidth}{!}{\begin{minipage}{\textwidth}
\centering 
\begin{tabular}{|c|c|c|c|c|c|c|c|c|}
\hline 
 Dataset ($ref$) & Workbench & Balancing & gridMin30 & gridMin45 & ExtBal1 & ExtBal2 & fastExtBal & fastExtBal2 \\ 
\hline 
Balancing & 100 & 17.29 & 110.38 & 103.37 & 55.09 & 57.53 & 0.4977  & 0.5813 \\ 
\hline 
gridMin30 & 100 & 464.82 & 13.19 & 18.29 & 25.13 & 23.69 & 0.3311 & 0.2426 \\ 
\hline 
gridMin45 & 100 & 532.59 & 12.94 & 10.99 & 22.47 & 25.42 & 29.68 & 25.61 \\ 
\hline 
ExtBal1 & 100 & 102.50 & 39.84 & 30.20 & 25.88 & 23.48 & 34.70 & 25.81 \\ 
\hline 
ExtBal2 & 100 & 128.64 & 33.03 & 28.02 & 28.25 & 26.83 & 35.26 & 25.76 \\ 
\hline 
fastExtBal & 100 & 107.44 & 43.51 & 31.65 & 28.00 & 26.14 & 36.57 & 26.55 \\ 
\hline 
fastExtBal2 & 100 & 118.52 & 33.18 & 29.80 & 27.87 & 24.45 & 34.94 & 27.34 \\ 
\hline 
\end{tabular}
\end{minipage}}
\label{tab:1}
\end{table*}

\begin{figure}
        \centering
        \includegraphics[width=.45\textwidth]{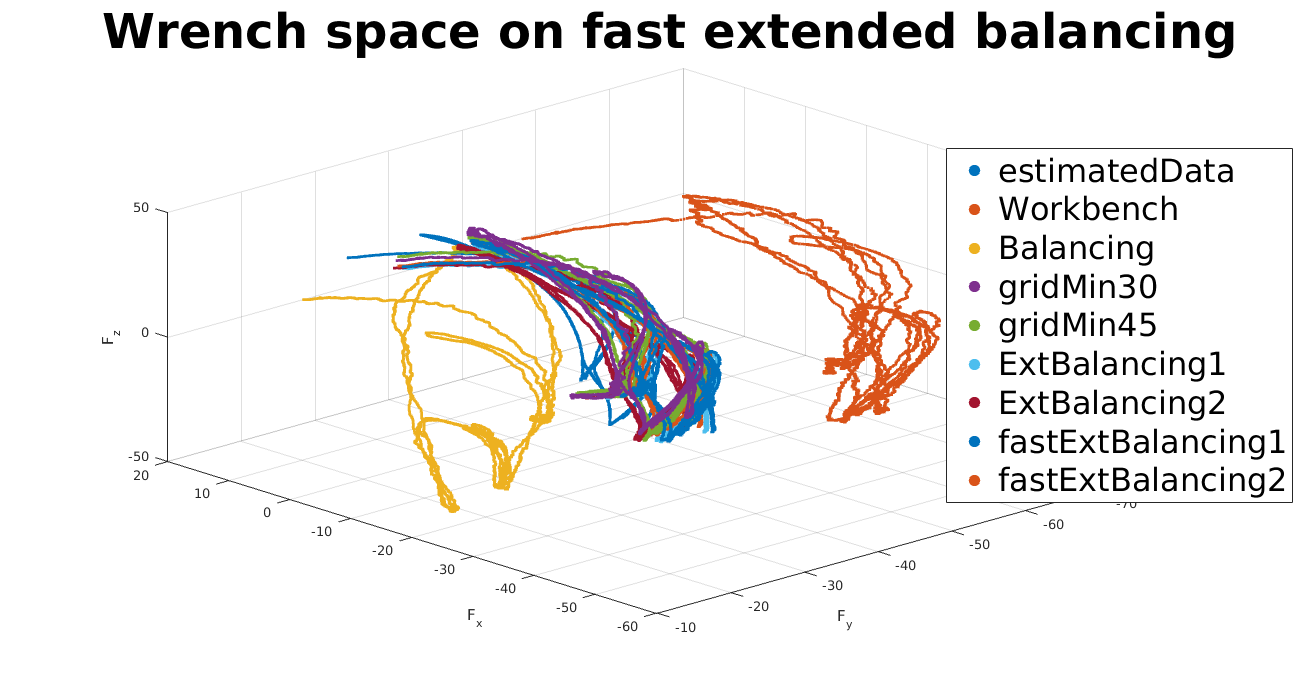}
        \caption{3D force comparison among the calibration matrices trained on each dataset against the model estimated forces on the fastExtBal1 dataset. }
        \label{fig:all1valid}
\end{figure}
 Using this procedure the calibration matrix with better results on general is the $2_{nd}$ extended balancing dataset with the code name of ``ExtBal2''.\\
 Since each axis can be seen as an independent problem we also considered the calibration matrix which has better results in each axis. The results are shown on Table \ref{tab:axis1}.
 
 \begin{table*}
 \centering
 \caption{Mean value of error on the  external force (N) estimated on the extended balancing dataset using the different calibration matrices with different $\lambda$ values}
 \resizebox{\textwidth}{!}{\begin{minipage}{\textwidth}
 \centering
\begin{tabular}{|c|c|c|c|c|c|c|c|c|c|} 
 \hline  
 Dataset ($ref$) & 0 & 0.5 & 1 & 1.5 & 2 & 4 & 6 & 8 & 10  \\ 
 \hline  
 Workbench  & 31.3111 & - & - & - & - & - & - & - &\\  
 \hline  
 ExtBal1 & -  & 30.7219  & 30.7553  & 30.7218  & 30.7498  & 30.7285  & 30.7670  & 30.7593  & 30.7847 \\  
 \hline  
 ExtBal2 & - & 33.5694  & 33.5650  & 33.5620  & 33.5504  & 33.5609  & 33.5823  & 33.5533  & 33.5698 \\ 
 \hline  
 fastExtBal2 & -  & 33.8062  & 33.8423  & 33.8398  & 33.8469  & 33.8275  & 33.8646  & 33.8530  & 33.8441 \\ 
 \hline  
 gridMin30 & - & 23.5073  & 23.5635  & 23.6338  & 23.7039   & 23.9476  & 24.1988  & 24.4084  & 24.6439 \\ 
 \hline  
 gridMin45 & - & 24.2435   & 24.2729  & 24.3017   & 24.3390   & 24.4010   & 24.4761  & 24.5821  & 24.6940 \\ 
 \hline  
 \end{tabular}
 \end{minipage}}
 \label{tab:all2}
 \end{table*}

 
 The results from the first validation procedure highlight how the calibration matrix trained on the normal balancing dataset was found to perform consistently worse then the \emph{Workbench} calibration matrix, and for this reason it was not included in the second validation procedure. 
 
\subsection{Second validation procedure}
The $\lambda$ values used for the second validation procedure are [0.5 1 1.5 2 4 6 8 10]. 
41 different calibration matrices where tested. Eight calibration matrices where generated for each valid dataset except the one in which they were tested. The results can be summarized in the table \ref{tab:all2}.

 For this procedure the calibration matrix with better results on general is the fixed pole dataset with the code name of ``gridMin30''. The difference between the magnitude of the forces at each axis between the Workbench and the ``gridMin30'' can be seen in figure \ref{fig:WvsB}. The mayor improvements are on the $f_y$ and $f_x$ axis.
 
 Since each axis can be seen as an independent problem, it is possible to see a further improvement if we take the best for each axis separately, which we will call "mixed". This comparison can be seen in figure \ref{fig:BvsF}. The results are shown in Table \ref{tab:axis1}. 
 \begin{figure}
        \centering
        \includegraphics[width=.45\textwidth]{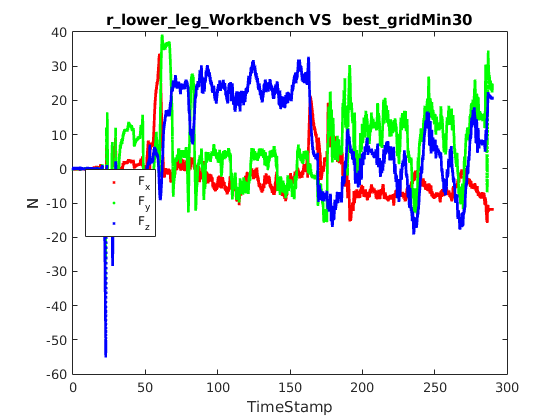}
        \caption{Difference between forces obtained with the Workbench calibration and the best \emph{in situ} calibration matrix}
        \label{fig:WvsB}
\end{figure}

 
\begin{table}
 \centering
 \caption{Best calibration matrix for each axis}
 \begin{tabular}{|c|c|c|c|c|} 
  \hline  
  & \multicolumn{2}{|c|}{$1_{st}$ Validation}   & \multicolumn{2}{|c|}{$ 2_{nd}$ Validation}  \\ 
 \hline  
 Axis & Dataset & Error $\%$ & Dataset & Mean $N$ / $Nm$ \\ 
 \hline  
 $f_x$  & ExtBal2 & 28.43 & ExtBal1$_{\lambda10} $ & 8.13\\  
 \hline  
 $f_y$ & fastExtBal & 14.30 & gridMin30 & 14.83 \\  
 \hline  
 $f_z$ & ExtBal1& 56.85  & gridMin45 & 13.46  \\ 
 \hline  
 $\tau_x$ & ExtBal2  & 54.62 & fastExtBal2$_{\lambda1.5}$  & 4.60   \\ 
 \hline  
 $\tau_y$ & Workbench & 100 & Workbench & 0.58  \\ 
 \hline  
 $\tau_z$ & Workbench & 100 & fastExtBal2$_{\lambda10 }$ & 1.88  \\ 
 \hline  
 \end{tabular}
   \label{tab:axis1}
 \end{table}
 
\begin{figure}
        \centering
        \includegraphics[width=.45\textwidth]{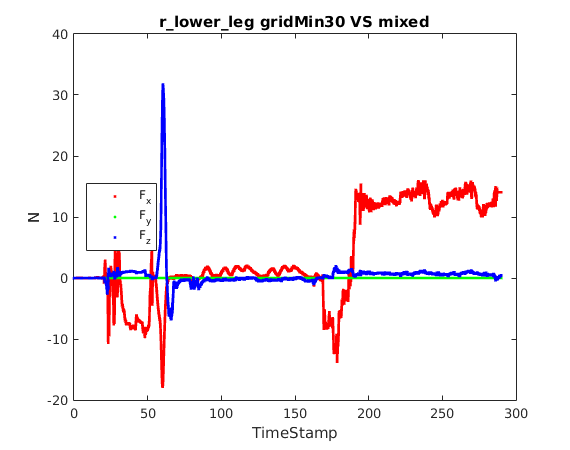}
        \caption{Difference between the forces obtained using the best new calibration matrix and the mixed matrix}
        \label{fig:BvsF}
\end{figure}

The improvement in the measurements was observed when the movement of the robot used to do the \textit{in situ} calibration spans a wide amount of the operational space. In the case of the first validation procedure we where also able to observe that when looking at the axis separately in the case of the torques the Workbench matrix still out performs the others.
It also justifies our choice of the regularization parameter to penalize the difference with respect to the Workbench. In the second validation procedure, it is possible to observe that, removing the offset as suggested in \cite{InSituAcc} gives calibration matrices that outperform the others in a more general scenario, something that was not reflected during the first validation procedure. This implies that a simple dataset with the robot on the pole moving the legs around is enough to perform a good enough calibration that can outperform datasets obtained using more complex scenarios. It should also be noted that difference in the magnitude of the force obtained with the new calibration matrix is around  25\% better than the Workbench. Since this calibration depends on the calibration matrix of two different sensors, the one in the ankle and the one in the hip, part of the magnitude of the force seen in this validation may be due to the fact that the one from the ankle has not been optimized. We expect better results once that sensor is also calibrated \emph{in situ}.

It can be seen that there is more than one optimal $ \lambda $ value depending on the axis. This confirms that the axis are indeed independent problems. Thus, it is possible to use a different solution per axis to obtain an even better final calibration matrix for the F/T sensor.

\section{CONCLUSIONS}
\label{sec:conclusions}
In this paper we introduced a new \emph{in situ} calibration technique for six axis F/T sensors that exploit the a-priori knowledge of the inertial parameters model of the robot. We show how the use of such a technique improved the quality of force measurements in the case of the legs of the iCub humanoid robot. 
Regarding future work we plan to insert the new calibration matrix in the robot and measure the improvements in the performance of the whole body balancing controller \cite{nava2016} as well as finding a set of optimal minimal poses or trajectories to do the calibration.

\section*{APPENDIX}
\label{appTheorem}
\begin{theorem}
If $C^*, o^*$ are the solutions to the calibration problem \eqref{eq:prob}:
\begin{equation}
C^* , o^* = \argmin_{C , o}\ ~ \frac{1}{N}\sum_{i = 1}^N \left\|w _i - Cr_i - o\right\|^2 .
\end{equation}

We have that:
\begin{IEEEeqnarray}{rCl}
C^* &=& \argmin_{C}\ ~ \frac{1}{N}\sum_{i = 1}^N \left\|\hat{w}_i - C\hat{r}_i\right\|^2 , \\
o^* &=& \mu_{w} - C^* \mu_r .
\end{IEEEeqnarray}
\end{theorem}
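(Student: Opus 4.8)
The plan is to use the standard ``profile out the offset'' argument: for each fixed calibration matrix $C$, minimize the cost over $o$ in closed form, substitute the optimizer back in, and observe that the resulting reduced cost is exactly the centralized objective \eqref{eq:centralizedCalibration}.

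First I would write the cost as $J(C,o) = \frac{1}{N}\sum_{i=1}^N \left\|w_i - Cr_i - o\right\|^2$ and use the elementary fact that $\min_{C,o} J(C,o) = \min_{C}\left(\min_{o} J(C,o)\right)$. For fixed $C$ the map $o \mapsto J(C,o)$ is a strictly convex quadratic, so it has a unique minimizer, characterized by the vanishing of its gradient, $\frac{\partial J}{\partial o} = -\frac{2}{N}\sum_{i=1}^N (w_i - Cr_i - o) = 0$. Solving and using the definition $\mu_{u} = \frac{1}{N}\sum_{i=1}^N u_i$ together with the linearity of the mean gives $o_C := \mu_w - C\mu_r$.

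Next I would substitute $o = o_C$ back into $J$. Since $w_i - Cr_i - o_C = (w_i - \mu_w) - C(r_i - \mu_r) = \hat{w}_i - C\hat{r}_i$ by \eqref{eq:centralizedRaw}, the reduced cost becomes $g(C) := J(C,o_C) = \frac{1}{N}\sum_{i=1}^N \left\|\hat{w}_i - C\hat{r}_i\right\|^2$, which is precisely the objective of \eqref{eq:centralizedCalibration}. Hence any minimizer $C^*$ of $J$ must minimize $g$, i.e.\ $C^* = \argmin_{C} g(C)$, and the associated optimal offset is recovered as $o^* = o_{C^*} = \mu_w - C^*\mu_r$, as claimed.

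I do not expect a genuine obstacle here. The only point deserving a line of care is the interchange $\min_{C,o} = \min_{C}\min_{o}$ and the claim that at a joint optimum the offset equals $o_{C^*}$: this holds because, for every fixed $C$, the inner problem in $o$ has a \emph{unique} solution, so any joint minimizer $(C^*,o^*)$ is forced to satisfy $o^* = o_{C^*}$. A further, purely cosmetic remark is that the squared Euclidean (Frobenius) norm makes the six rows of $C$ separate into independent regressions, but the profiling argument above does not need this and works directly with the matrix variable.
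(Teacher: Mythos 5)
Your proof is correct, and it takes a slightly different route than the paper. You \emph{profile out} the offset: for each fixed $C$ you derive the unique inner minimizer $o_C=\mu_w-C\mu_r$ from the stationarity condition of a strictly convex quadratic, substitute it back, and observe that the residual $w_i-Cr_i-o_C$ collapses to $\hat{w}_i-C\hat{r}_i$, so the reduced cost is exactly the centralized objective; the identity $\min_{C,o}=\min_C\min_o$ together with uniqueness of the inner minimizer then forces $o^*=\mu_w-C^*\mu_r$ for any joint minimizer. The paper instead writes $w_i-Cr_i-o=\hat{w}_i-C\hat{r}_i+(\mu_w-C\mu_r-o)$, expands the squared norm, and uses $\sum_i\hat{w}_i=0$, $\sum_i\hat{r}_i=0$ to annihilate the cross term, so the cost splits into the centralized objective (depending only on $C$) plus the nonnegative term $\left\|\mu_w-C\mu_r-o\right\|^2$, which is driven to zero for every $C$ by the stated choice of $o$. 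The two arguments rest on the same underlying fact; yours trades the explicit cross-term cancellation for a convexity/stationarity argument and the interchange of minimizations (which you correctly flag and justify via uniqueness of the inner optimum), while the paper's sum-of-squares decomposition is purely algebraic and exhibits the separation between the $C$-term and the $o$-term at a glance. Both are complete; neither has a gap.
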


\begin{proof}
Using the definitions of $\hat{w}_i$ and $\hat{r}_i$ we can write the cost function in \eqref{eq:prob} as:
\begin{align*}
\frac{1}{N}\sum_{i = 1}^N \left\|\hat{w}_i - C\hat{r}_i + \mu_w - C \mu_r - o\right\|^2 = \\
= 
\frac{1}{N}\sum_{i = 1}^N \left\|\hat{w}_i - C\hat{r}_i  \right\|^2 
+ 
\frac{1}{N}\sum_{i = 1}^N \left\|\mu_w - C \mu_r - o\right\|^2 
+ \\ +
\frac{2}{N}\sum_{i = 1}^N (\hat{w}_i - C\hat{r}_i)^\top(\mu_w - C \mu_r - o) .
\end{align*}

As $\sum_{i = 1}^N \hat{w}_i = 0$ and $\sum_{i = 1}^N \hat{r}_i = 0$ from their definition \eqref{eq:centralizedRaw} we get that the third term of the is always equal to zero, and so we have that the calibration problem reduces to: 
\begin{align*}
C^* , o^* = \argmin_{C , o}\ ~ \big(& \frac{1}{N}\sum_{i = 1}^N \left\|\hat{w}_i - C\hat{r}_i  \right\|^2 
+ \\ &+
 \left\|\mu_w - C \mu_r - o\right\|^2 \big)
\end{align*}
Noting that the minimum of the second term is always $0$ for $o = \mu_w - C \mu_r, \hspace{0.3em} \forall C$, we prove the theorem.
\end{proof}

\bibliographystyle{IEEEtran}
\bibliography{Literature/bib}

\addtolength{\textheight}{-12cm}

\end{document}